\begin{document}

\markboth{Shuyue Guan, Murray Loew}
{A Distance-based Separability Measure for Internal Cluster Validation}

%
\catchline{}{}{}{}{}
%

\title{A Distance-based Separability Measure for Internal Cluster Validation}

\author{Shuyue Guan\footnote{ORCID: \url{https://orcid.org/0000-0002-3779-9368}}}

\address{Department of Biomedical Engineering, The George Washington University,\\
800 22nd St NW, Washington, DC 20052, USA\\
frankshuyueguan@gwu.edu}

\author{Murray Loew\footnote{Corresponding author}}

\address{Department of Biomedical Engineering, The George Washington University,\\
800 22nd St NW, Washington, DC 20052, USA\\
loew@gwu.edu}

\maketitle


\begin{abstract}
To evaluate clustering results is a significant part of cluster analysis. Since there are no true class labels for clustering in typical unsupervised learning, many internal cluster validity indices (CVIs), which use predicted labels and data, have been created. Without true labels, to design an effective CVI is as difficult as to create a clustering method. And it is crucial to have more CVIs because there are no universal CVIs that can be used to measure all datasets and no specific methods of selecting a proper CVI for clusters without true labels. Therefore, to apply a variety of CVIs to evaluate clustering results is necessary. In this paper, we propose a novel internal CVI -- the Distance-based Separability Index (DSI), based on a data separability measure. We compared the DSI with eight internal CVIs including studies from early Dunn (1974) to most recent CVDD (2019) and an external CVI as ground truth, by using clustering results of five clustering algorithms on 12 real and 97 synthetic datasets. Results show DSI is an effective, unique, and competitive CVI to other compared CVIs. We also summarized the general process to evaluate CVIs and created the rank-difference metric for comparison of CVIs’ results. \textit{The code can be found in author’s website linked up with the ORCID: \url{https://orcid.org/0000-0002-3779-9368}.}
\end{abstract}

\keywords{cluster validity; cluster validity index evaluation; clustering analysis; separability measure; distance-based separability index; sequence comparison.}

\section{Introduction}
Cluster analysis is an important unsupervised learning method in machine learning. The clustering algorithms divide a dataset into clusters \cite{Jain1999Data} based on the distribution structure of the data, without any prior knowledge.
Clustering is widely studied and used in many fields, such as data mining, pattern recognition, object detection, image segmentation, bioinformatics, and data compression \cite{Roiger2017Data,Wen2019shape-based,Guan2018Application,Dhanachandra2017survey, Karim2020Deep, Marchetti2018Spatial}.
The shortage of labels for training is a big problem in some machine learning applications, such as medical image analysis, and applications of big data \cite{Nasraoui2019Clustering} because labeling is expensive \cite{Hoo-Chang2016Deep}. Since unsupervised machine learning does not use labels for training, to apply cluster analysis can avoid the problem.

In general, the main methods of cluster analysis can be categorized into centroid-based (\textit{e.g.}, k-means), distribution-based (\textit{e.g.}, EM algorithm \cite{Byrne2017EM}), density-based (\textit{e.g.}, DBSCAN \cite{Ester1996density-based}), hierarchical (\textit{e.g.}, Ward linkage \cite{Ward_Jr1963Hierarchical}), and spectral clustering \cite{Von_Luxburg2007tutorial}.
None of the clustering methods, however, is able to perform well with all kinds of datasets \cite{Kleinberg2003Impossibility,Von_Luxburg2012Clustering:}. That is, a clustering method that performs well with some types of datasets would perform poorly with some others. For this reason, various clustering methods have been applied to datasets. Consequently, effective clustering validations (measures of clustering quality) are required to evaluate which clustering method performs well for a dataset \cite{Ben-David2009Measures,Adolfsson2019To}. And, clustering validations are also used to tune the parameters of clustering algorithms.

There are two categories of clustering validations: \textit{internal} and \textit{external} validations. \textit{External validations} use the truth-labels of classes and predicted labels, and \textit{internal validations} use predicted labels and data. Since external validations require true labels and there are no true class labels in unsupervised learning tasks, we can employ only the internal validations in cluster analysis \cite{Liu2013Understanding}.
In fact, to evaluate clustering results by internal validations has the same difficulty as to do clustering itself because measures have no more information than the clustering methods \cite{Pfitzner2008Characterization}. Therefore, the difficulty of designing an internal \textit{Cluster Validity Index} (CVI) is like creating a clustering algorithm. The different part is that a clustering algorithm can update clustering results by a value (loss) from the optimizing function but the CVI provides only a value for clusters evaluation.

\subsection{Related works}
Various CVIs have been created for the clustering of many types of datasets \cite{Desgraupes2017Clustering}. By methods of calculation \cite{Hu2019Internal}, the internal CVIs are based on two categories of representatives: center and non-center. Center-based internal CVIs use descriptors of clusters. For example, the \textit{Davies–Bouldin} index (DB) \cite{Davies1979Cluster} uses cluster diameters and the distance between cluster centroids. Non-center internal CVIs use descriptors of data points. For example, the \textit{Dunn} index \cite{Dunn1974Well-Separated} considers the minimum and maximum distances between two data points.

Besides the DB and Dunn indexes, in this paper, some other typical internal CVIs are selected for comparison. The \textit{Calinski-Harabasz} index (CH) \cite{Calinski1974dendrite} and \textit{Silhouette coefficient} (Sil) \cite{Rousseeuw1987Silhouettes:} are two traditional internal CVIs. In recently developed internal CVIs, the \textit{I} index \cite{Maulik2002Performance}, \textit{WB} index \cite{Zhao2014WB-index:}, \textit{Clustering Validation index based on Nearest Neighbors} (CVNN) \cite{Liu2013Understanding}, and \textit{Cluster Validity index based on Density-involved Distance} (CVDD) \cite{Hu2019Internal} are selected. Eight typical internal CVIs, which range from early studies (Dunn, 1974) to the most recent studies (CVDD, 2019), are selected to compare with our proposed CVI.

In addition, an external CVI - the \textit{Adjusted Rand Index} (ARI) \cite{Santos2009On} is selected as the ground truth for comparison because external validations use the true class labels and predicted labels. Unless otherwise indicated, \textbf{the ``CVIs'' that appear hereafter mean internal CVIs and the only external CVI is named ``ARI''}.

\section{Distance-based Separability Measure}
Since the goal of clustering is to separate a dataset into clusters, in the macro-perspective, how well a dataset has been separated could be indicated via the separability of clusters. In a dataset, data points are assigned class labels by the clustering algorithm. The most difficult situation for separation of the dataset occurs when all labels are randomly assigned and the data points of different classes will have the same distribution (distributions have the same shape, position, and support, \textit{i.e.}, the same probability density function). To analyze the distributions of different-class data, we propose the \textit{Distance-based Separability Index} (DSI) \footnote{More studies about the DSI will appear in other forthcoming publications, which can be found in author’s website linked up with the ORCID: \url{https://orcid.org/0000-0002-3779-9368}.}.

Suppose a dataset contains two classes $X$, $Y$ and have $N_x$, $N_y$ data points, respectively, we can define:

\begin{definition} \label{def:1}
The \textit{Intra-Class Distance} (ICD) set is a set of distances between any two points in the same class. \textit{e.g.}, for class $X$, its ICD set $\{d_x\}$:
\[
\{d_x\}=\{ \|x_i-x_j\|_2 | x_i,x_j\in X;x_i\neq x_j\}.
\]
\end{definition}

\begin{remark}
The metric for distance is Euclidean $(l^2\,\text{-norm})$. Given $|X|=N_x$, then $|\{d_x\}|=\frac{1}{2}N_x(N_x-~1)$.
\end{remark}

\begin{definition} \label{def:2}
The \textit{Between-Class Distance} (BCD) set is the set of distances between any two points from different classes. \textit{e.g.}, for class $X$ and $Y$, their BCD set $\{ d_{x,y}\}$:
\[
\{ d_{x,y} \}=\{ \|x_i-~y_j\|_2 \, |\, x_i\in X;y_j\in Y \}.
\]
\end{definition}

\begin{remark}
Given $|X|=N_x,|Y|=N_y$, then $|\{d_{x,y} \}|=N_x N_y$.
\end{remark}

Then, the Theorem \ref{thm:1} shows how the ICD and BCD sets are related to the distributions of data:

\begin{theorem} \label{thm:1}
When $|\{d_x\}|,|\{d_y\}|\to \infty$, if and only if the two classes $X$ and $Y$ have the same distribution, the distributions of the ICD and BCD sets are identical.
\end{theorem}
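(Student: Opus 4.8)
The plan is to replace the finite sets $\{d_x\},\{d_y\},\{d_{x,y}\}$ by their population limits and then reduce to a purely distributional statement. Write $F_X$ and $F_Y$ for the laws from which the points of $X$ and $Y$ are sampled. As $N_x,N_y\to\infty$, the empirical distribution of $\{d_x\}$ converges almost surely to the law of $\|X_1-X_2\|$, where $X_1,X_2$ are i.i.d.\ $\sim F_X$ (this is a Glivenko--Cantelli statement for the U-statistic whose kernel is $\|\cdot-\cdot\|$; the pairwise distances are dependent, but the empirical c.d.f.\ of such a kernel still converges to its expectation); likewise $\{d_y\}$ converges to the law of $\|Y_1-Y_2\|$ and $\{d_{x,y}\}$ to the law of $\|X-Y\|$ with $X\sim F_X$, $Y\sim F_Y$ independent. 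The theorem then becomes the claim that these three laws coincide if and only if $F_X=F_Y$. The ``only if'' direction is immediate, since when $F_X=F_Y$ the pairs $(X_1,X_2)$, $(Y_1,Y_2)$ and $(X,Y)$ are all pairs of independent draws from one common distribution, hence their norms are equal in law.

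For the ``if'' direction I would invoke the energy distance. Assuming finite first moments $\mathbb{E}\|X\|,\mathbb{E}\|Y\|<\infty$, set
\[
\mathcal{E}(F_X,F_Y)\;=\;2\,\mathbb{E}\|X-Y\|\;-\;\mathbb{E}\|X_1-X_2\|\;-\;\mathbb{E}\|Y_1-Y_2\| .
\]
A now-standard fact about the energy distance (Sz\'ekely and Rizzo; it also follows from the Fourier identity $\mathcal{E}(F_X,F_Y)=c_n\int_{\mathbb{R}^n}|\widehat{f_X}(t)-\widehat{f_Y}(t)|^2\,|t|^{-(n+1)}\,dt$ with a constant $c_n>0$) is that $\mathcal{E}(F_X,F_Y)\ge 0$, with equality if and only if $F_X=F_Y$. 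Now if the three distance laws are identical, then in particular their means agree, so $\mathcal{E}(F_X,F_Y)=0$ and therefore $F_X=F_Y$. Note that only the first moments of the distance sets are actually used here, so equality of the full distance laws is more than sufficient.

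I expect the reverse direction to be the real obstacle, and it is worth saying why nothing naive works: there exist distinct point configurations with identical pairwise-distance multisets (``homometric'' sets, as in the turnpike problem), so the ICD of a single class alone can never determine $F_X$ -- the coincidence with the BCD is essential, and the energy-distance functional is precisely the device that exploits it. Two technical points then remain: (i) the convergence of the empirical distance distributions to their population counterparts must be justified with care (the U-statistic / Glivenko--Cantelli argument sketched above), and (ii) the energy-distance step needs finite first moments; to obtain the statement with no moment hypothesis one would instead argue directly with characteristic functions, e.g.\ by truncating the integrand in the Fourier identity and passing to the limit.
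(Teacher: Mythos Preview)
Your argument is correct and takes a genuinely different route from the paper. The paper proceeds by discretizing the common support into small cells $\Delta_i$, tracking the proportion of each inter-cell distance $D_{ij}$ and intra-cell distance $\delta$ in the ICD and BCD sets, and showing these proportions agree as $N_x\to\infty$ (for the forward direction); for the converse it argues by contradiction, using a preliminary lemma that the cell-wise density ratios $n_{yi}/n_{xi}$ all equal the global ratio $\alpha$ exactly when $f=g$, and then forcing $\alpha_i=\alpha$ from the assumed equality of the $\delta$-proportions. Your approach is both shorter and more rigorous: the passage to population distance laws via a U-statistic/Glivenko--Cantelli step makes the limiting statement precise, and the energy-distance identity of Sz\'ekely--Rizzo dispatches the hard direction in one stroke, using only the \emph{means} of the three distance laws rather than their full shapes. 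What the paper's argument buys is that it is entirely elementary and self-contained, avoiding Fourier analysis and any appeal to the energy-distance literature; on the other hand it is heuristic in places (the cell-scale limit and the treatment of $\delta$ are informal), and it tacitly assumes a common bounded support $\Omega$, which in particular already guarantees the finite-first-moment hypothesis you were careful to flag.
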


The full proof of Theorem \ref{thm:1} is shown in \ref{sec:proof}. Here we provide an informal explanation: points in $X$ and $Y$ having the same distribution and covering the same region can be considered to have been sampled from one distribution $Z$. Hence, both ICDs of $X$ and $Y$, and BCDs between $X$ and $Y$ are actually ICDs of $Z$. Consequently, the distributions of ICDs and BCDs are identical. In other words, that the distributions of the ICD and BCD sets are identical indicates all labels are assigned randomly and thus, the dataset has the least separability.

\subsection{Computation of the DSI}

For computation of the DSI of the two classes $X$ and $Y$, first, the ICD sets of $X$ and $Y$: $\{d_x\},\{d_y\}$ and the BCD set: $\{d_{x,y}\}$ are computed by their definitions (Def. \ref{def:1} and \ref{def:2}). Second, the \textit{Kolmogorov-Smirnov} (KS) distance \cite{scipy.stats.kstest} is applied to examine the similarity of the distributions of the ICD and BCD sets. Although there are other statistical measures to compare two distributions, such as Bhattacharyya distance, Kullback-Leibler divergence, and Jensen-Shannon divergence, most of them require the two sets to have the same number of data points. It is easy to show that the $|\{d_x\}|,|\{d_y\}|$ and $|\{d_{x,y}\}|$ cannot be the same. The Wasserstein distance \cite{ramdas2017wasserstein} is also a potentially suitable measure, but our testing indicates that the Wasserstein distance is not as sensitive as the KS distance. The result of a two-sample KS distance is the maximum distance between two cumulative distribution functions (CDFs):
\[
KS(P,Q)=\sup_{x} |P(x)-Q(x)|
\]
Where $P$ and $Q$ are the respective CDFs of the two distributions $p$ and $q$. 

Hence, the similarities between the ICD and BCD sets are then computed using the KS distance \footnote{By using the \texttt{scipy.stats.ks\_2samp} from the SciPy package in Python. \url{https://docs.scipy.org/doc/scipy/reference/generated/scipy.stats.ks_2samp.html}}: $s_x=KS(\{d_x\},\{d_{x,y}\})$ and $s_y=KS(\{d_y\},\{d_{x,y}\})$. Since there are two classes, the DSI is the average of the two KS distances: $DSI(\{X,Y\})=\frac{(s_x+s_y)}{2}$. The $KS(\{d_x\},\{d_y\})$ is not used because it shows only the shape difference between the distributions of two classes $X$ and $Y$, not their location information. For example, the two distributions of classes $X$ and $Y$ having the same shape, but no overlap will have \textit{zero} KS distance between their ICD sets: $KS(\{d_x\},\{d_y\})=0$. And we do not use the weighted average because once the distributions of the ICD and BCD sets can be well characterized, the sizes of $X$ and $Y$ will not affect the KS distances $s_x$ and $s_y$.

In general, for an $n$-class dataset, we obtain its DSI with the \textbf{DSI Algorithm}:
\begin{enumerate}
    \item Compute $n$ ICD sets for each class: $\{d_{C_i}\};\; i=1,2,\cdots,n$.
    \item Compute $n$ BCD sets for each class. For the $i$-th class of data $C_i$, the BCD set is the set of distances between any two points in $C_i$ and $\overline{C_i}$ (other classes, not $C_i$): $\{d_{C_i,\overline{C_i}}\}$.
    \item Compute the $n$ KS distances between ICD and BCD sets for each class: $s_i=KS(\{d_{C_i }\},\{d_{C_i,\overline{C_i}}\})$.
    \item Calculate the average of the $n$ KS distances; the DSI of this dataset is $DSI(\{C_i \})=\frac{\sum s_i}{n}$.
\end{enumerate}

The running time of computing ICD and BCD sets is linear with the dimensionality and quadratic with the amount of data.

\subsection{Cluster validation using DSI}
A small DSI (low separability) means that the ICD and BCD sets are very similar. In this case, by Theorem \ref{thm:1}, the distributions of classes $X$ and $Y$ are similar too. Hence, data of the two classes are difficult to separate.

\begin{figure}[th]
\centerline{\includegraphics[width=0.65\textwidth]{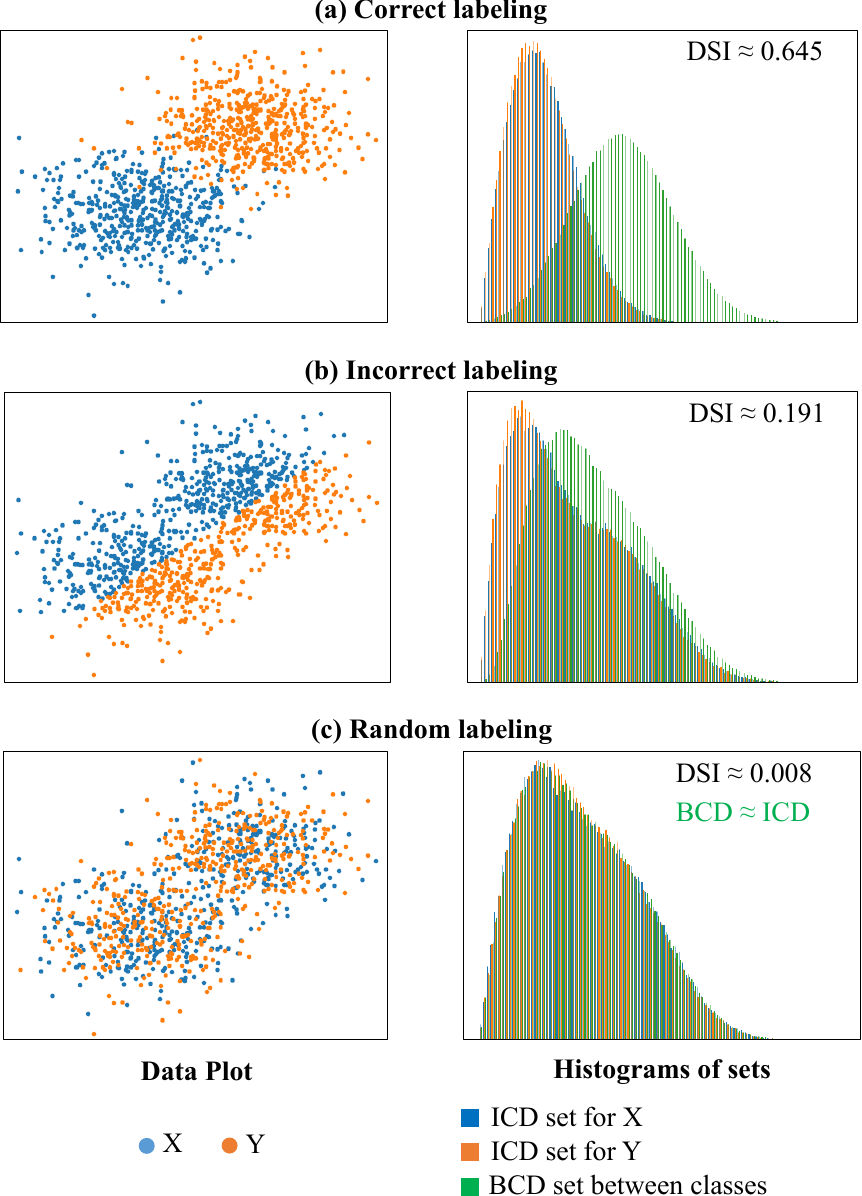}}
\vspace*{8pt}
\caption{Two clusters (classes) datasets with different label assignments. Each histogram indicates the relative frequency of the value of each of the three distance measures (indicated by color).}
\label{fig:1}
\end{figure}

An example of two-class dataset is shown in Figure \ref{fig:1}. Figure \ref{fig:1}a shows that, if the labels are assigned correctly by clustering, the distributions of ICD sets will be different from the BCD set and the DSI will reach the maximum value for this dataset because the two clusters are well separated. 
For an incorrect clustering, in Figure \ref{fig:1}b, the difference between distributions of ICD and BCD sets becomes smaller so that the DSI value decreases.
Figure \ref{fig:1}c shows an extreme situation, that is, if all labels are randomly assigned, the distributions of the ICD and BCD sets will be nearly identical. It is the worst case of separation for the two-class dataset and its separability (DSI) is close to zero. Therefore, the separability of clusters can be reflected well by the proposed DSI. The DSI ranges from 0 to 1, $DSI\in (0,1)$, and \textit{\textbf{we suppose that}} the greater DSI value means the dataset is clustered better.
\section{Materials}
\subsection{Compared CVIs}
CVIs are used to evaluate the clustering results. In this paper, several internal CVIs including the proposed DSI have been employed to examine the clustering results from different clustering methods (algorithms). To use different clustering methods on a given dataset may obtain different cluster results and thus, CVIs are used to select the best clusters. We choose eight commonly used (classical and recent) internal CVIs and an external CVI - the \textit{Adjusted Rand Index} (ARI) to compare with our proposed DSI (Table \ref{tab:1}). The role of ARI is the ground truth for comparison because ARI involves true labels (clusters) of the dataset.

\begin{table}
\tbl{Compared CVIs.}
{\begin{tabular}{ccc}  \toprule
\textbf{Name} & \textbf{Optimal$^a$} & \textbf{Reference} \\  \colrule
\textbf{ARI}$^b$ & MAX & (Santos \& Embrechts, 2009)  \cite{Santos2009On} \\ 
\colrule
\textbf{Dunn} index & MAX & (Dunn, J.,1973) \cite{Dunn1974Well-Separated} \\
\textbf{C}alinski-\textbf{H}arabasz Index & MAX & (Calinski \& Harabasz, 1974) \cite{Calinski1974dendrite} \\
\textbf{D}avies–\textbf{B}ouldin index & min & (Davies \& Bouldin, 1979) \cite{Davies1979Cluster} \\
\textbf{Sil}houette Coefficient & MAX & (Rousseeuw, 1987) \cite{Rousseeuw1987Silhouettes:} \\
\textbf{I} & MAX & (U. Maulik, 2002) \cite{Maulik2002Performance} \\
\textbf{CVNN} & min & (Yanchi L., 2013)  \cite{Liu2013Understanding} \\
\textbf{WB} & min & (Zhao Q., 2014) \cite{Zhao2014WB-index:} \\
\textbf{CVDD} & MAX & (Lianyu H., 2019) \cite{Hu2019Internal} \\
\textbf{DSI} & MAX & Proposed \\ \botrule
\end{tabular}}
\begin{tabnote}
$^{a.}$ Optimal column means the CVI for best case has the minimum or maximum value.
$^{b.}$ The ground truth for comparison.
\end{tabnote}
\label{tab:1}
\end{table}

\subsection{Synthetic and real datasets}
\label{sec:datasets}
In this paper, the synthetic datasets for clustering are from the Tomas Barton repository \footnote{\url{  https://github.com/deric/clustering-benchmark/tree/master/src/main/resources/datasets/artificial}}, which contains 122 artificial datasets. Each dataset has hundreds to thousands of objects with several to tens of classes in two or three dimensions (features). We have selected 97 datasets for experiment because the 25 unused datasets have too many objects to run the clustering processing in reasonable time. The names of the 97 used synthetic datasets are shown in \ref{sec:syn_names}. Illustrations of these datasets can be found in Tomas Barton's homepage \footnote{\url{https://github.com/deric/clustering-benchmark}}.

The 12 real datasets used for clustering are from three sources: the \textit{sklearn.datasets} package \footnote{\url{https://scikit-learn.org/stable/datasets}}, UC Irvine Machine Learning Repository \cite{Dheeru2017UCI} and Tomas Barton's repository (real world datasets) \footnote{\url{https://github.com/deric/clustering-benchmark/tree/master/src/main/resources/datasets/real-world}}. Unlike the synthetic datasets, the dimensions (feature numbers) of most selected real datasets are greater than three. Hence, CVIs must be used to evaluate their clustering results rather than plotting clusters as for 2D or 3D synthetic datasets. Details about the 12 real datasets appear in Table \ref{tab:2}.

\begin{table}
\tbl{The description of used real datasets.}
{\begin{tabular}{ccccc} \toprule
\textbf{Name} & \textbf{Title} & \textbf{Object\#} & \textbf{Feature\#} & \textbf{Class\#} \\ \colrule
Iris & Iris plants dataset & 150 & 4 & 3 \\ 
digits & Optical recognition of handwritten digits dataset & 5620 & 64 & 10 \\ 
wine & Wine recognition dataset & 178 & 13 & 3 \\ 
cancer & Breast cancer Wisconsin (diagnostic) dataset & 569 & 30 & 2 \\ 
faces & Olivetti faces dataset & 400 & 4096 & 40 \\ 
vertebral & Vertebral column data & 310 & 6 & 3 \\ 
haberman & Haberman's survival data & 306 & 3 & 2 \\ 
sonar & Sonar, Mines vs. Rocks & 208 & 60 & 2 \\ 
tae & Teaching Assistant evaluation & 151 & 5 & 3 \\ 
thy & Thyroid disease data  & 215 & 5 & 3 \\ 
vehicle & Vehicle silhouettes & 946 & 18 & 4 \\ 
zoo & Zoo data  & 101 & 16 & 7 \\ \botrule
\end{tabular}}
\label{tab:2}
\end{table}

\section{Experiments}
In general, there are two strategies to evaluate CVIs using a dataset: 1) to compare with ground truth (real clusters with labels); 2) to predict the number of clusters (classes) by finding the optimal number of clusters as identified by CVIs \cite{Cheng2019Novel}.

\subsection{Using real clusters}
By using datasets' information of real clusters with labels, the steps to evaluate CVIs are:
\begin{enumerate}
\item To obtain clustering results by running different clustering methods (algorithms) on a dataset.
\item To compute CVIs of these clustering results and their ARI (ground truth) using real labels.
\item To compare the values of CVI with ARI.
\item To repeat the former three steps for a new dataset.
\end{enumerate}

In this paper, five clustering algorithms from various categories are used, they are: k-means, Ward linkage, spectral clustering, BIRCH \cite{Zhang1996BIRCH:} and EM algorithm (Gaussian Mixture). The CVIs used for evaluation and comparison are shown in Table \ref{tab:1} and the used datasets are introduced in Section \ref{sec:datasets}. And we provide two evaluation methods to compare the values of CVIs with the ground truth ARI; they are called \textit{Hit-the-best} and \textit{Rank-difference}, which are described as follows.

\subsubsection{Evaluation metric: Hit-the-best}
For a dataset, clustering results obtained by different clustering algorithms would have different CVIs and ARI. If a CVI gives the best score to a clustering result that also has the best ARI score, this CVI is considered to be a correct prediction (hit-the-best).
Table \ref{tab:3} shows CVIs of clustering results by different clustering methods on a dataset.
For the \texttt{wine} dataset, k-means receives the best ARI score and Dunn, DB, WB, I, CVNN and DSI give k-means the best score; and thus, the six CVIs are hit-the-best. If we mark hit-the-best CVIs as 1 and others as 0, CVI scores in Table \ref{tab:3} can be converted to hit-the-best results (Table \ref{tab:4}) for the \texttt{wine} dataset.

\begin{table}
\tbl{CVI scores of clustering results on the \texttt{wine} recognition dataset.}
{\begin{tabular}{cccccc}  \toprule
\diagbox{\textbf{Validity}$^a$}{\textbf{\makecell[r]{Clustering \\ method}}} & \textbf{KMeans} & \textbf{\makecell{Ward \\ Linkage}} & \textbf{\makecell{Spectral \\ Clustering}} & \textbf{BIRCH} & \textbf{EM} \\
\colrule
ARI$^b$ + & \textbf{0.913}$^c$ & 0.757 & 0.880 & 0.790 & 0.897 \\
\colrule
Dunn + & \textbf{0.232} & 0.220 & 0.177 & 0.229 & \textbf{0.232} \\
CH + & 70.885 & 68.346 & 70.041 & 67.647 & \textbf{70.940} \\
DB - & \textbf{1.388} & 1.390 & 1.391 & 1.419 & 1.389 \\
Silhouette + & 0.284 & 0.275 & 0.283 & 0.278 &\textbf{0.285} \\
WB - & \textbf{3.700} & 3.841 & 3.748 & 3.880 & \textbf{3.700} \\
I + & \textbf{5.421} & 4.933 & 5.326 & 4.962 & \textbf{5.421} \\
CVNN - & \textbf{21.859} & 22.134 & 21.932 & 22.186 & \textbf{21.859} \\
CVDD + & 31.114 & \textbf{31.141} & 29.994 & 30.492 & 31.114 \\
DSI + & \textbf{0.635} & 0.606 & 0.629 & 0.609 & 0.634 \\
\botrule
\end{tabular}}
\begin{tabnote}
$^{a.}$ CVI for best case has the minimum (-) or maximum (+) value.
$^{b.}$ The first row shows results of ARI as ground truth; other rows are CVIs.
$^{c.}$ Bold value: the best case by the measure of this row.
\end{tabnote}
\label{tab:3}
\end{table}

\begin{table}
\tbl{Hit-the-best results for the \texttt{wine} dataset.}
{\begin{tabular}{cccccccccc}  \toprule
\diagbox[width=8em]{\textbf{\scalebox{0.95}{Dataset}}}{\textbf{\scalebox{0.95}{CVI}}} & \textbf{Dunn} & \textbf{CH} & \textbf{DB} & \textbf{Sil$^a$} & \textbf{WB} & \textbf{I} & \textbf{CVNN} & \textbf{CVDD} & \textbf{DSI} \\
\colrule
wine & 1 & 0 & 1 & 0 & 1 & 1 & 1 & 0 & 1 \\
\botrule
\end{tabular}}
\begin{tabnote}
$^{a.}$ Sil = Silhouette.
\end{tabnote}
\label{tab:4}
\end{table}

For the hit-the-best, however, the best score can be unstable and random in some cases. For example, in Table \ref{tab:3}, the ARI score of EM is very close to that of k-meams and the Silhouette score of EM is also very close to that of k-meams. If these values fluctuated a little and changed the best cases, the comparison outcome for this dataset will be changed. Another drawback of hit-the-best is that it concerns only one best case and ignores others; it does not evaluate the whole picture for one dataset. The hit-the-best might be a stricter criterion but lacks robustness, and it is vulnerable to extreme cases such as when scores of different clustering results are very close to each other. Hence, we create another method to compare the score sequences of CVIs and ARI through their ranks.

\subsubsection{Evaluation metric: Rank-difference}
This comparison method fixes the two problems of the hit-the-best: one is instability for similar scores and the other is the bias on only one case.

We apply quantization to solve the problem of similar scores. Every score in the score sequence of a CVI (\textit{i.e.}, a row in Table \ref{tab:3}) will be assigned a rank number and similar scores have high probability to be allocated the same rank number. The procedure is:
\begin{enumerate}
\item Find the minimum and maximum values of $N$ scores from one sequence.
\item Uniformly divide [min,max] into $N-1$ intervals.
\item Label intervals from max to min by $1,2,\ldots,N-1$.
\item If a score is in the $k$-th interval, its rank number is $k$.
\item Define rank number of max is 1, and intervals are left open and right closed: (upper value, lower value].
\end{enumerate}

\begin{figure}[th]
\centerline{\includegraphics{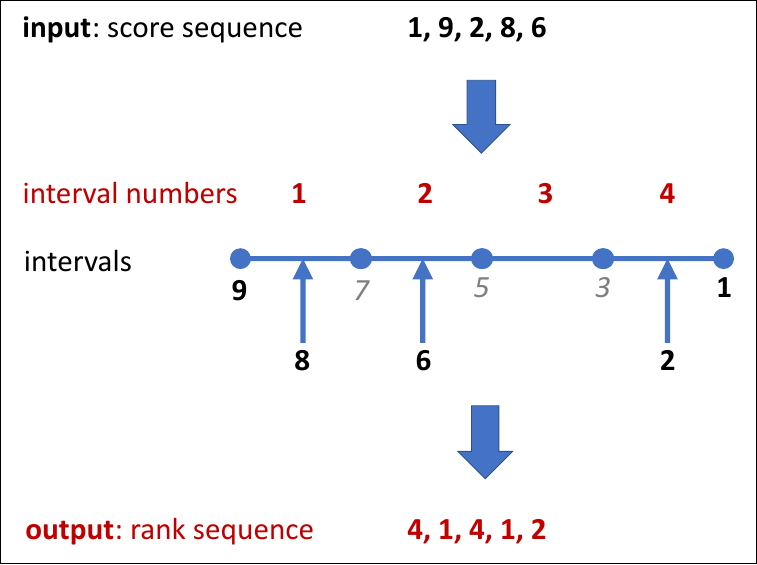}}
\vspace*{8pt}
\caption{An example of rank numbers assignment.}
\label{fig:2}
\end{figure}

Figure \ref{fig:2} shows an example of converting a \textit{score sequence} to a \textit{rank sequence} (rank numbers). The rank number of scores 9 and 8 is 1 because they are in the 1st interval. For the same reason, the rank number of scores 1 and 2 is 4. Such quantization is better than assigning rank numbers by ordering because it avoids the assignment of different rank numbers to very close scores in most cases (it is still possible to use different rank numbers for very close scores; for example, in the Figure \ref{fig:2} case, if scores 8 and 6 changed to 7.1 and 6.9, their rank numbers will still be 1 and 2 even they are very close).

\begin{remark}
If the score whose rank number is 1 (1-rank score) represents the optimal performance, that the rank number of the maximum CVI score is 1 only works for the CVI whose optimum is maximum but does not work for the CVI whose optimum is minimum, like DB and WB, because its 1-rank score should be minimum.
A simple solution to make the rank number work for both types of CVIs is to \textbf{\textit{negate}} all values in score sequences of the CVIs whose optimum is minimum before converting to rank sequence (Figure \ref{fig:2}). Thus, the 1-rank score always represents the optimal performance for all CVIs.
\end{remark}

Table \ref{tab:5} shows rank sequences of CVIs converted from the score sequences in Table \ref{tab:3}. For each CVI, four ranks are assigned to five scores. Since the ARI row shows the truth rank sequence, for rank sequences in other CVI rows, the more similar to the ARI row, the better the CVI performs.

\begin{table}
\tbl{Rank sequences of CVIs converted from the score sequences in Table \ref{tab:3}.}
{\begin{tabular}{cccccc}  \toprule
\diagbox[width=10.5em]{\textbf{\scalebox{0.95}{Validity}}}{\scalebox{0.95}{\textbf{\makecell[r]{Clustering \\ method}}}} & \scalebox{0.95}{\textbf{KMeans}} & \scalebox{0.95}{\textbf{\makecell{Ward \\ Linkage}}} & \scalebox{0.95}{\textbf{\makecell{Spectral \\ Clustering}}} & \scalebox{0.95}{\textbf{BIRCH}} & \scalebox{0.95}{\textbf{EM}} \\
\colrule
ARI$^a$ & 1 & 4 & 1 & 4 & 1 \\
\colrule
Dunn & 1 & 1 & 4 & 1 & 1 \\
CH & 1 & 4 & 2 & 4 & 1 \\
DB & 1 & 1 & 1 & 4 & 1 \\
Silhouette & 1 & 4 & 1 & 3 & 1 \\
WB & 1 & 4 & 2 & 4 & 1 \\
I & 1 & 4 & 1 & 4 & 1 \\
CVNN & 1 & 4 & 1 & 4 & 1 \\
CVDD & 1 & 1 & 4 & 3 & 1 \\
DSI & 1 & 4 & 1 & 4 & 1 \\
\botrule
\end{tabular}}
\begin{tabnote}
$^{a.}$ The first row shows results of ARI as ground truth; other rows are CVIs.
\end{tabnote}
\label{tab:5}
\end{table}

For two score sequences (\textit{e.g.}, CVI and ARI), after quantizing them to two rank sequences, we will compute the difference of two rank sequences (called \textit{rank-difference}), which is simply defined as the summation of absolute difference between two rank sequences. For example, the two rank sequences from Table \ref{tab:5} are:
\[
\begin{array}{lr}
     ARI: & \{1,\ 4,\ 1,\ 4,\ 1\} \\
     CVDD:& \{1,\ 1,\ 4,\ 3,\ 1\}
\end{array}
\]
Their rank-difference, which is the summation of absolute difference, is:
\[
\left|1-1\right|+\left|4-1\right|+\left|1-4\right|+\left|4-3\right|+\left|1-1\right|=7
\]
Smaller rank-difference means the distance of two sequences is closer. That two sequences of CVI and ARI are closer indicates a better prediction. It is not difficult to show that rank-difference for two $N$-length score sequences lies in the ranges $[0,N(N-2)]$. Table \ref{tab:6} shows rank-differences calculated by the ARI and nine CVIs from Table \ref{tab:5}. The CVI having the lower rank-difference value is better and 0 is the best because it has the same performance as the ground truth (ARI).

\begin{table}
\tbl{Rank-difference results for the \texttt{wine} dataset.}
{\begin{tabular}{cccccccccc}  \toprule
\diagbox[width=8em]{\textbf{\scalebox{0.95}{Dataset}}}{\textbf{\scalebox{0.95}{CVI}}} & \textbf{Dunn} & \textbf{CH} & \textbf{DB} & \textbf{Sil$^a$} & \textbf{WB} & \textbf{I} & \textbf{CVNN} & \textbf{CVDD} & \textbf{DSI} \\
\colrule
wine & 9 & 1 & 3 & 1 & 1 & 0 & 0 & 7 & 0 \\
\botrule
\end{tabular}}
\begin{tabnote}
$^{a.}$ Sil = Silhouette.
\end{tabnote}
\label{tab:6}
\end{table}

\subsection{To predict the number of clusters}
\label{sec:c_num}
Some clustering methods require setting the number of clusters (classes) in advance, such as the k-means, spectral clustering, and Gaussian mixture (EM). Suppose we have a dataset and know its real number of clusters, $c$; then the steps to evaluate CVIs through predicting the number of clusters in this dataset are:
\begin{enumerate}
\item To run clustering algorithms by setting the number of clusters $k=2,3,4,\ldots$ (the real number of clusters $c$ is included) to get clusters.
\item To compute CVIs of these clusters.
\item The predicted number of clusters by the $i$-th CVI: $\hat{k}_i$, is the number of clusters that perform best on the $i$-th CVI. (\textit{i.e.}, the optimal number of clusters recognized by this CVI)
\item The successful prediction of the $i$-th CVI is that its predicted number of clusters equals the real number of clusters: $\hat{k}_i=c$.
\end{enumerate}

For several CVIs, the number of successful predictions could be zero, one, two, or more. Besides CVIs, the success also depends on the datasets and clustering methods. In this study, we selected the \texttt{wine}, \texttt{tae}, \texttt{thy}, and \texttt{vehicle} datasets (see Table \ref{tab:2}), and clustering methods: k-means, spectral clustering, and EM algorithm.
\section{Results}
\subsection{Clusters of real and synthetic datasets}
\begin{table}[t]
\tbl{Hit-the-best results for real datasets.}
{\begin{tabular}{cccccccccc}  \toprule
\diagbox[width=8em]{\textbf{\scalebox{0.95}{Dataset}}}{\textbf{\scalebox{0.95}{CVI}}} & \textbf{Dunn} & \textbf{CH} & \textbf{DB} & \textbf{Sil$^a$} & \textbf{WB} & \textbf{I} & \textbf{CVNN} & \textbf{CVDD} & \textbf{DSI} \\
\colrule
Iris & 0 & 0 & 0 & 0 & 0 & 0 & 0 & 1 & 0\\ 
digits & 0 & 0 & 0 & 1 & 0 & 0 & 1 & 0 & 1\\ 
wine & 1 & 0 & 1 & 0 & 1 & 1 & 1 & 0 & 1\\ 
cancer & 0 & 0 & 0 & 0 & 0 & 0 & 1 & 0 & 0\\ 
faces & 1 & 1 & 1 & 1 & 1 & 1 & 0 & 1 & 1\\ 
vertebral & 0 & 0 & 0 & 0 & 0 & 0 & 0 & 0 & 0\\ 
haberman & 0 & 1 & 0 & 0 & 1 & 0 & 0 & 0 & 0\\ 
sonar & 0 & 1 & 0 & 0 & 1 & 0 & 0 & 0 & 0\\ 
tae & 0 & 0 & 0 & 0 & 0 & 0 & 1 & 1 & 0\\ 
thy & 0 & 0 & 0 & 0 & 0 & 0 & 0 & 0 & 0\\ 
vehicle & 0 & 0 & 0 & 0 & 0 & 0 & 1 & 0 & 1\\ 
zoo & 1 & 0 & 1 & 0 & 0 & 1 & 0 & 0 & 1\\ 
\colrule
Total$^b$   & 3  & 3  & 3  & 2  & 4  & 3  & 5  & 3  & 5 \\ 
(\textbf{rank})  & (\textbf{4}) &  (\textbf{4}) &  (\textbf{4}) &  (\textbf{9}) &  (\textbf{3}) &  (\textbf{4}) &  (\textbf{1}) &  (\textbf{4}) &  (\textbf{1})\\ 
\botrule
\end{tabular}}
\begin{tabnote}
$^{a.}$ Sil = Silhouette.
$^{b.}$ Larger value is better (rank number is smaller).
\end{tabnote}
\label{tab:7}
\end{table}
\begin{table}[t]
\tbl{Rank-difference results for real datasets.}
{\begin{tabular}{cccccccccc}  \toprule
\diagbox[width=8em]{\textbf{\scalebox{0.95}{Dataset}}}{\textbf{\scalebox{0.95}{CVI}}} & \textbf{Dunn} & \textbf{CH} & \textbf{DB} & \textbf{Sil$^a$} & \textbf{WB} & \textbf{I} & \textbf{CVNN} & \textbf{CVDD} & \textbf{DSI} \\
\colrule
Iris & 8 & 13 & 15 & 15 & 13 & 11 & 15 & 6 & 15\\ 
digits & 2 & 2 & 1 & 1 & 4 & 6 & 8 & 7 & 6\\ 
wine & 9 & 1 & 3 & 1 & 1 & 0 & 0 & 7 & 0\\ 
cancer & 8 & 7 & 6 & 9 & 7 & 8 & 2 & 7 & 9\\ 
faces & 4 & 3 & 4 & 4 & 2 & 3 & 9 & 2 & 5\\ 
vertebral & 6 & 13 & 14 & 12 & 15 & 13 & 15 & 6 & 13\\ 
haberman & 9 & 7 & 7 & 7 & 7 & 9 & 7 & 7 & 8\\ 
sonar & 7 & 3 & 3 & 4 & 3 & 4 & 11 & 10 & 3\\ 
tae & 9 & 14 & 9 & 9 & 14 & 15 & 0 & 9 & 9\\ 
thy & 5 & 2 & 2 & 2 & 2 & 6 & 2 & 3 & 10\\ 
vehicle & 12 & 11 & 9 & 13 & 13 & 12 & 3 & 3 & 7\\ 
zoo & 1 & 6 & 1 & 6 & 6 & 1 & 9 & 8 & 1\\ 
\colrule
Total$^b$ & 80 & 82 & 74 & 83 & 87 & 88 & 81 & 75 & 86 \\ 
(\textbf{rank}) &  (\textbf{3}) &  (\textbf{5}) &  (\textbf{1}) &  (\textbf{6}) & (\textbf{8}) &  (\textbf{9}) &  (\textbf{4}) & (\textbf{2}) & (\textbf{7})\\ 
\botrule
\end{tabular}}
\begin{tabnote}
$^{a.}$ Sil = Silhouette.
$^{b.}$ Smaller value is better (rank number is smaller).
\end{tabnote}
\label{tab:8}
\end{table}

As discussed before, for one dataset and a CVI, an evaluation result can be computed by using the hit-the-best or rank-difference metric. In other words, one result is obtained by comparing one CVI row in Table \ref{tab:3} with the ground truth (ARI). The outcome of a hit-the-best comparison is either 0 or 1; 1 means that the best clusters predicted by CVI are the same as ARI; otherwise, the outcome is 0. Table \ref{tab:4} shows the hit-the-best results of nine CVIs on the \texttt{wine} dataset. The outcome of the rank-difference comparison is a value in the range $[0,N(N-2)]$, where $N$ is the sequence length. As Table \ref{tab:5} shows, the length of sequences is 5; hence, the range of rank-difference is [0, 15]. Table \ref{tab:6} shows the rank-difference results of nine CVIs on the \texttt{wine} dataset. The smaller rank-difference value means the CVI predicts better.

We applied\footnote{The code can be found in author’s website linked up with the ORCID: \url{https://orcid.org/0000-0002-3779-9368}} the evaluation method to the selected CVIs (Table \ref{tab:1}) by using real and synthetic datasets (Section \ref{sec:datasets}) and the five clustering methods (Table \ref{tab:3}). Table \ref{tab:7} and Table \ref{tab:9} are hit-the-best comparison results for real and synthetic datasets. Table \ref{tab:8} and Table \ref{tab:10} are rank-difference comparison results for real and synthetic datasets. To compare across data sets, we summed all results at the bottom of each table. For the hit-the-best comparison, the larger total value is better because more hits appear. For the rank-difference comparison, the smaller total value is better because results of the CVI are closer to that of ARI. Finally, ranks in the last row uniformly indicate CVIs’ performances. The smaller rank number means better performance. Since there are 97 synthetic datasets, to keep the tables to manageable lengths, Tables \ref{tab:9} and \ref{tab:10} present illustrative values for the datasets and most importantly, the totals and ranks for each measure.

\begin{table}[t]
\tbl{Hit-the-best results for 97 synthetic datasets.}
{\begin{tabular}{cccccccccc}  \toprule
\diagbox[width=8em]{\textbf{\scalebox{0.95}{Dataset}}}{\textbf{\scalebox{0.95}{CVI}}} & \textbf{Dunn} & \textbf{CH} & \textbf{DB} & \textbf{Sil$^a$} & \textbf{WB} & \textbf{I} & \textbf{CVNN} & \textbf{CVDD} & \textbf{DSI} \\
\colrule
3-spiral & 1 & 0 & 0 & 0 & 0 & 0 & 0 & 1 & 0 \\ 
aggregation & 0 & 0 & 0 & 0 & 0 & 0 & 1 & 1 & 1 \\ 
\vdots & \vdots & \vdots & \vdots & \vdots & \vdots & \vdots & \vdots & \vdots & \vdots \\ 
zelnik5 & 1 & 0 & 0 & 0 & 0 & 0 & 0 & 1 & 0\\ 
zelnik6 & 1 & 1 & 0 & 0 & 1 & 0 & 0 & 0 & 0\\ 
\colrule
Total$^b$ & 46 & 30 & 35 & 35  & 29  & 31 & 35  & 50 & 40 \\ 
 (\textbf{rank}) &  (\textbf{2}) &  (\textbf{8}) & (\textbf{4}) & (\textbf{4}) & (\textbf{9}) & (\textbf{7}) & (\textbf{4}) & (\textbf{1}) & (\textbf{3})\\ 
\botrule
\end{tabular}}
\begin{tabnote}
$^{a.}$ Sil = Silhouette.
$^{b.}$ Larger value is better (rank number is smaller).
\end{tabnote}
\label{tab:9}
\end{table}

\begin{table}[t]
\tbl{Rank-difference results for 97 synthetic datasets.}
{\begin{tabular}{cccccccccc}  \toprule
\diagbox[width=8em]{\textbf{\scalebox{0.95}{Dataset}}}{\textbf{\scalebox{0.95}{CVI}}} & \textbf{Dunn} & \textbf{CH} & \textbf{DB} & \textbf{Sil$^a$} & \textbf{WB} & \textbf{I} & \textbf{CVNN} & \textbf{CVDD} & \textbf{DSI} \\
\colrule
3-spiral & 2 & 12 & 14 & 13 & 14 & 12 & 13 & 1 & 13 \\ 
aggregation & 3 & 3 & 2 & 2 & 4 & 5 & 2 & 5 & 3 \\ 
\vdots & \vdots & \vdots & \vdots & \vdots & \vdots & \vdots & \vdots & \vdots & \vdots \\ 
zelnik5 & 4 & 10 & 12 & 10 & 11 & 11 & 10 & 4 & 11 \\ 
zelnik6 & 4 & 3 & 2 & 2 & 3 & 3 & 5 & 2 & 2 \\ 
\colrule
Total$^b$ & 406 & 541 & 547  & 489 & 583 & 554  & 504  & 337  & 415 \\  
(\textbf{rank}) & (\textbf{2}) & (\textbf{6}) & (\textbf{7}) & (\textbf{4}) & (\textbf{9}) & (\textbf{8}) & (\textbf{5}) & (\textbf{1}) & (\textbf{3}) \\  
\botrule
\end{tabular}}
\begin{tabnote}
$^{a.}$ Sil = Silhouette.
$^{b.}$ Smaller value is better (rank number is smaller).
\end{tabnote}
\label{tab:10}
\end{table}

\subsection{Prediction of number of clusters}
Another strategy of CVI evaluation is to predict the number of clusters (classes). Its detailed processes are described in Section \ref{sec:c_num}. The clustering methods we selected require setting the number of clusters (classes) in advance; they are: k-means, spectral clustering, and EM algorithm. The \textit{a priori} number of clusters we set for the three algorithms are: $k=2,\ 3,\ 4,\ 5,\ 6$ (the real number of clusters is included). Clustering algorithms have been applied on four datasets: the \texttt{wine}, \texttt{tae}, \texttt{thy}, and \texttt{vehicle} datasets (see Table \ref{tab:2} for details).

\begin{table}[h]
\tbl{Number of clusters prediction results on the \texttt{wine}, \texttt{tae}, \texttt{thy}, and \texttt{vehicle} datasets.}{
\begin{tabular}{cc}
\\ 
The \texttt{wine} dataset has 178 samples in 3 classes.
 & The \texttt{tae} dataset has 151 samples in 3 classes. \\
\begin{tabular}{cccc}  \toprule
\diagbox[width=8em]{\scalebox{0.8}{\textbf{Validity}}}{\scalebox{0.8}{\textbf{\makecell[r]{Clustering \\ method}}}} & \scalebox{0.8}{\textbf{KMeans}} & \scalebox{0.8}{\textbf{\makecell{Spectral \\ Clustering}}} & \scalebox{0.8}{\textbf{EM}} \\
\colrule
Dunn & \textbf{3}$^b$ & 4 & 6 \\
CH & \textbf{3} & \textbf{3} & \textbf{3} \\
DB & \textbf{3} & \textbf{3} & \textbf{3} \\
Sil$^a$ & \textbf{3} & \textbf{3} & \textbf{3} \\
WB & \textbf{3} & \textbf{3} & \textbf{3} \\
I & \textbf{3} & 2 & 2 \\
CVNN & 2 & 2 & 2 \\
CVDD & 2 & 2 & 2 \\
DSI & \textbf{3} & \textbf{3} & \textbf{3} \\

\botrule
\end{tabular} &

\begin{tabular}{cccc}  \toprule
\diagbox[width=8em]{\scalebox{0.8}{\textbf{Validity}}}{\scalebox{0.8}{\textbf{\makecell[r]{Clustering \\ method}}}} & \scalebox{0.8}{\textbf{KMeans}} & \scalebox{0.8}{\textbf{\makecell{Spectral \\ Clustering}}} & \scalebox{0.8}{\textbf{EM}} \\
\colrule
Dunn & 2 & \textbf{3} & 2 \\
CH & 6 & 6 & 4 \\
DB & 6 & 6 & 5 \\
Sil & 6 & \textbf{3} & 2 \\
WB & 6 & 6 & 6 \\
I & 5 & \textbf{3} & \textbf{3} \\
CVNN & 2 & 2 & 2 \\
CVDD & 2 & 2 & 2 \\
DSI & 6 & \textbf{3} & 5 \\
\botrule
\end{tabular} \\
\\
\\ 
The \texttt{thy} dataset has 215 samples in 3 classes.
 & The \texttt{vehicle} dataset has 948 samples in 4 classes. \\
\begin{tabular}{cccc}  \toprule
\diagbox[width=8em]{\scalebox{0.8}{\textbf{Validity}}}{\scalebox{0.8}{\textbf{\makecell[r]{Clustering \\ method}}}} & \scalebox{0.8}{\textbf{KMeans}} & \scalebox{0.8}{\textbf{\makecell{Spectral \\ Clustering}}} & \scalebox{0.8}{\textbf{EM}} \\
\colrule
Dunn & 5 & 2 & 6 \\
CH & \textbf{3} & \textbf{3} & \textbf{3} \\
DB & 5 & \textbf{3} & 4 \\
Sil & 4 & \textbf{3} & 2 \\
WB & 6 & \textbf{3} & 6 \\
I & \textbf{3} & \textbf{3} & 4 \\
CVNN & 2 & 2 & 2 \\
CVDD & 2 & 2 & 5 \\
DSI & 5 & \textbf{3} & 6 \\
\botrule
\end{tabular} &

\begin{tabular}{cccc}  \toprule
\diagbox[width=8em]{\scalebox{0.8}{\textbf{Validity}}}{\scalebox{0.8}{\textbf{\makecell[r]{Clustering \\ method}}}} & \scalebox{0.8}{\textbf{KMeans}} & \scalebox{0.8}{\textbf{\makecell{Spectral \\ Clustering}}} & \scalebox{0.8}{\textbf{EM}} \\
\colrule
Dunn & 6 & 2 & 5 \\
CH & 2 & 2 & 2 \\
DB & 2 & 2 & 2 \\
Sil & 2 & 2 & 2 \\
WB & 3 & 3 & 3 \\
I & 5 & 2 & 5 \\
CVNN & 2 & 2 & 2 \\
CVDD & 2 & 2 & 2 \\
DSI & 5 & \textbf{4} & 5 \\
\botrule
\end{tabular} \\
\\ 
\end{tabular}
}
\begin{tabnote}
$^{a.}$ Sil = Silhouette.
$^{b.}$ Bold value: the successful prediction of the CVI whose predicted number of clusters equals the real number of clusters.
\end{tabnote}
\label{tab:11}
\end{table}

Table \ref{tab:11} shows prediction of the number of clusters based on CVIs, clustering algorithms and datasets. The predicted number of clusters by a CVI is the number of clusters that perform best on this CVI. Captions of sub-tables contain the real number of clusters (classes) for each dataset. A successful prediction of the CVI is that its predicted number of clusters equals the real number of clusters. In the results, it is worth noting that \ul{only DSI successfully predicted the number of clusters from spectral clustering for all datasets}. This implies that DSI may work well with the spectral clustering method.

\section{Discussion}
Although DSI obtains only one first-rank (Table \ref{tab:7}) compared with other CVIs in experiments, having no last rank means that it still performs better than some other CVIs. It is worth emphasizing that all compared CVIs are excellent and widely used. Therefore, experiments show that DSI can join them as a new promising CVI. Actually, by examining those CVI evaluation results, we confirm that \textbf{none of the CVIs performs well for all datasets}. And thus, it would be better to measure clustering results by using several effective CVIs. The DSI provides another CVI option. Also, DSI is unique: none of the other CVIs performs the same as DSI. For example, in Table \ref{tab:7}, for the \texttt{vehicle} dataset, only CVNN and DSI predicted correctly. But for \texttt{zoo} dataset, CVNN was wrong and DSI was correct. For another example, in Table \ref{tab:8}, for the \texttt{sonar} dataset, DSI performed better than Dunn, CVNN, and CVDD; but for the \texttt{cancer} dataset, Dunn, CVNN, and CVDD performed better than DSI. More examples of the diversity of CVI are shown in Table \ref{tab:12} and their plots with true labels are shown in Figure \ref{fig:3} (the \texttt{atom} dataset has three features, and the others have two features). 

\begin{table}
\tbl{Rank-difference results for selected synthetic datasets.}
{\begin{tabular}{cccccccccc}  \toprule
\diagbox[width=8em]{\textbf{\scalebox{0.95}{Dataset}}}{\textbf{\scalebox{0.95}{CVI}}} & \textbf{Dunn} & \textbf{CH} & \textbf{DB} & \textbf{Sil$^a$} & \textbf{WB} & \textbf{I} & \textbf{CVNN} & \textbf{CVDD} & \textbf{DSI} \\
\colrule
atom & 0 & 15 & 15 & 15 & 15 & 14 & 4 & 0 & 0 \\
disk-4000n & 10 & 0 & 7 & 0 & 0 & 0 & 11 & 12 & 1 \\
disk-1000n & 6 & 12 & 15 & 12 & 13 & 14 & 15 & 8 & 14 \\
D31 & 5 & 1 & 2 & 1 & 0 & 2 & 10 & 2 & 0 \\
flame & 10 & 6 & 11 & 7 & 7 & 8 & 12 & 11 & 7 \\
square3 & 11 & 0 & 2 & 0 & 0 & 7 & 0 & 11 & 0 \\
\botrule
\end{tabular}}
\begin{tabnote}
$^{a.}$ Sil = Silhouette.
\end{tabnote}
\label{tab:12}
\end{table}

The former examples show the need for employing more CVIs because each is different and every CVI may have its special capability.  That capability, however, is difficult to describe clearly. Some CVIs’ definitions show them to be categorized into center/non-center representative \cite{Hu2019Internal} or density-representative. Similarly, the DSI is a separability-representative CVI. That is, DSI performs better for clusters having high separability with true labels (like the \texttt{atom} dataset in Figure \ref{fig:3}); otherwise, if real clusters have low separability, the incorrectly predicted clusters may have a higher DSI score (Figure \ref{fig:4}).

\begin{figure}
    \centering
    \subfloat[\centering atom]{
    \frame{\includegraphics[height=0.2\textwidth,width=0.2\textwidth]{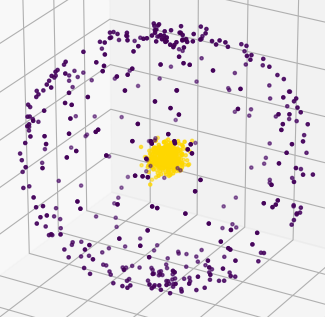}}}
    \quad
    \subfloat[\centering disk-4000n]{
    \frame{\includegraphics[height=0.2\textwidth,width=0.2\textwidth]{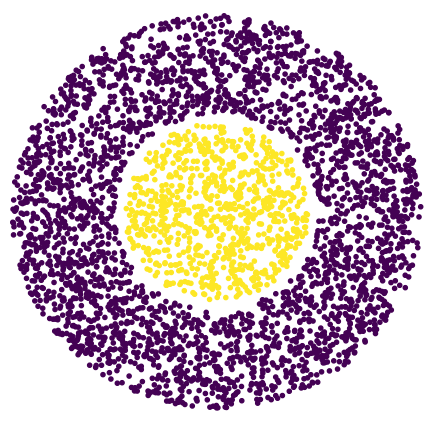}}}
    \quad
    \subfloat[\centering disk-1000n]{
    \frame{\includegraphics[height=0.2\textwidth,width=0.2\textwidth]{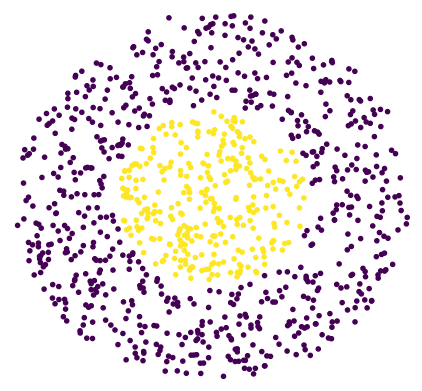}}}
    \vskip 0pt
    \subfloat[\centering D31]{
    \frame{\includegraphics[height=0.2\textwidth,width=0.2\textwidth]{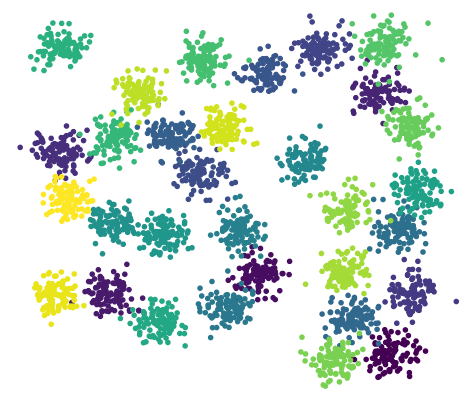}}}
    \quad
    \subfloat[\centering flame]{
    \frame{\includegraphics[height=0.2\textwidth,width=0.2\textwidth]{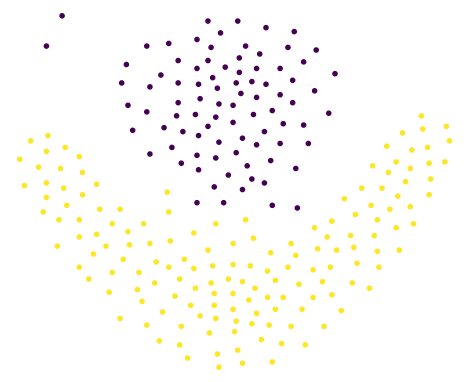}}}
    \quad
    \subfloat[\centering square3]{
    \frame{\includegraphics[height=0.2\textwidth,width=0.2\textwidth]{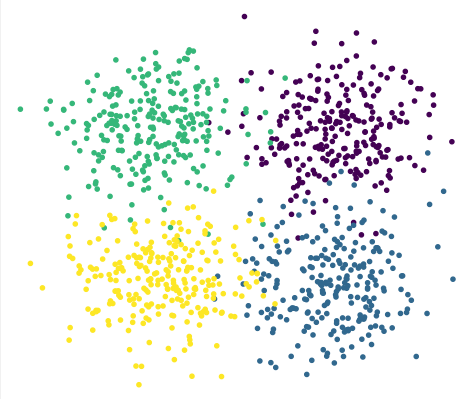}}}
    
    \caption{Examples for rank-differences of synthetic datasets.}
    \label{fig:3}
\end{figure}

Clusters in datasets have great diversity so that the diversity of clustering methods and CVIs is necessary. Since the preferences of CVIs are difficult to analyze precisely and quantitatively, more studies for selecting a proper CVI to measure clusters without true labels should be performed in the future. Having more CVIs expands the options. And before the breakthrough that we discover approaches to select an optimal CVI to measure clusters, it is meaningful to provide more effective CVIs and apply more than one CVI to evaluate clustering results.

\begin{figure}
    \centering
    \subfloat[\centering{Real clusters:\newline DSI $\approx0.456$}]{
    \frame{\includegraphics[height=0.25\textwidth,width=0.25\textwidth]{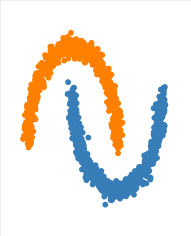}}}
    \qquad
    \subfloat[\centering{Predicted clusters:\newline DSI $\approx0.664$}]{
    \frame{\includegraphics[height=0.25\textwidth,width=0.25\textwidth]{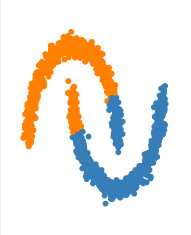}}}
    
    \caption{Wrongly-predicted clusters have a higher DSI score than real clusters.}
    \label{fig:4}
\end{figure}

In addition, to evaluate CVIs is also an important task. Its general process is:
\begin{enumerate}
    \item To create different clusters from datasets.
    \item To compute external CVI with true labels as ground truth and internal CVIs.
    \item To compare results of internal CVIs with the ground truth. Results from an effective internal CVI should be close to the results of an external CVI.
\end{enumerate}

In this paper, we generated different clusters using a variety of clustering methods. To generate different clusters can also be achieved through changing parameters of clustering algorithms (\textit{e.g.}, the number of clusters $k$ in k-means clustering) or taking subsets of datasets. The comparison step could also apply other methods besides the two evaluation metrics: hit-the-best and rank-difference that we have used.

\section{Conclusion}
To evaluate clustering results, it is essential to apply various CVIs because there is no universal CVI for all datasets and no specific method for selecting a proper CVI to measure clusters without true labels. In this paper, we propose the DSI as a novel CVI based on a data separability measure. Since the goal of clustering is to separate a dataset into clusters, we hypothesize that better clustering could cause these clusters to have a higher separability.

Including the proposed DSI, we applied nine internal CVI and one external CVI (ARI) as ground truth to clustering results of five clustering algorithms on various real and synthetic datasets. The results show DSI to be an effective, unique, and competitive CVI to other CVIs compared here. And we summarized the general process to evaluate CVIs and used two methods to compare the results of CVIs with ground truth. We created the rank-difference as an evaluation metric to compare two score sequences. This metric avoids two disadvantages of the hit-the-best measure, which is commonly used in CVI evaluation. We believe that both the DSI and rank-difference metric can be helpful in clustering analysis and CVI studies in the future.



\appendix
\section{Proof of Theorem \ref{thm:1}}
\label{sec:proof}

Consider two classes $X$ and $Y$ that have the same distribution (covering the same region) and have sufficient data points. Suppose $X$ and $Y$ have $N_x$ and $N_y$ data points, and assume the sampling density ratio is $\frac{N_y}{N_x} =\alpha$. Before providing the proof of Theorem \ref{thm:1}, we firstly prove Lemma \ref{lma:1}, which will be used later.

\begin{lemma} \label{lma:1}
If and only if two classes $X$ and $Y$ have the same distribution covering region $\Omega$ and $\frac{N_y}{N_x} =\alpha$, for any sub-region $\Delta \subseteq \Omega$, with $X$ and $Y$ having $n_{xi},n_{yi}$ points, $\frac{n_{yi}}{n_{xi}} =\alpha$ holds.
\end{lemma}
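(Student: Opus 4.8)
The plan is to prove the two implications of the biconditional separately, reading the statement in the asymptotic regime $|\{d_x\}|,|\{d_y\}|\to\infty$ (equivalently $N_x,N_y\to\infty$) that governs the proof of Theorem~\ref{thm:1}. For the ``only if'' direction I would assume $X$ and $Y$ are drawn independently from a common distribution with probability measure $\mu$ supported on $\Omega$, and that $N_y/N_x=\alpha$. Fix an arbitrary measurable sub-region $\Delta\subseteq\Omega$ with $\mu(\Delta)>0$. Then $n_{xi}=\sum_{k=1}^{N_x}\mathbf{1}[x_k\in\Delta]$ is a sum of $N_x$ i.i.d.\ Bernoulli$(\mu(\Delta))$ indicators, and $n_{yi}$ is a sum of $N_y$ such indicators; by the strong law of large numbers, $n_{xi}/N_x\to\mu(\Delta)$ and $n_{yi}/N_y\to\mu(\Delta)$ almost surely, whence
\[
\frac{n_{yi}}{n_{xi}}=\frac{N_y}{N_x}\cdot\frac{n_{yi}/N_y}{n_{xi}/N_x}\;\longrightarrow\;\alpha\cdot\frac{\mu(\Delta)}{\mu(\Delta)}=\alpha .
\]

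For the ``if'' direction, suppose $n_{yi}/n_{xi}=\alpha$ for every sub-region $\Delta\subseteq\Omega$. Choosing $\Delta=\Omega$ gives $N_y/N_x=\alpha$ at once. For a general $\Delta$, dividing $n_{yi}=\alpha\,n_{xi}$ by $N_y=\alpha\,N_x$ yields $n_{yi}/N_y=n_{xi}/N_x$, i.e.\ the empirical measures of $X$ and $Y$ assign equal mass to every sub-region. Since a collection of sub-regions such as all balls (or all axis-parallel boxes) inside $\Omega$ forms a generating $\pi$-system for the Borel sets, and since each empirical measure converges to the true law of its class as $N_x,N_y\to\infty$, the two limiting laws agree on this $\pi$-system and are therefore equal; that is, $X$ and $Y$ have the same distribution.

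The step I expect to be the main obstacle is making the quantifier ``for any sub-region'' rigorous. The strong-law argument in the first direction gives, for each \emph{fixed} $\Delta$, an almost-sure limit, and upgrading this to a statement holding simultaneously over the (uncountable) family of all sub-regions needs either a reduction to a countable generating subfamily or a uniform law of large numbers of Glivenko--Cantelli / VC type. The symmetric subtlety in the converse is the passage from ``the empirical measures agree on sub-regions'' to ``the underlying distributions coincide,'' which is precisely where a measure-uniqueness ($\pi$--$\lambda$) argument enters. I would state these points explicitly rather than leave the limiting operations informal, and I would note that the biconditional is to be understood with the usual almost-sure / in-the-limit qualifications attached to empirical quantities.
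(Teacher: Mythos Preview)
Your argument is correct, but it takes a markedly different and more rigorous route than the paper. The paper's proof is a two-line density heuristic: it assumes both classes admit densities $f$ and $g$, replaces the (random) count in a tiny cell $\Delta_i$ by the deterministic approximation $n_{xi}\approx |\Delta_i|\,f(x_i)\,N_x$ (and likewise for $Y$), computes
\[
\frac{n_{yi}}{n_{xi}}=\alpha\,\frac{g(x_i)}{f(x_i)},
\]
and reads off both implications at once from the equivalence $\alpha\,g/f=\alpha\iff g\equiv f$. The limit $N_x,N_y\to\infty$ is implicit in treating counts as expected values, and $N_y/N_x=\alpha$ is taken as a standing hypothesis rather than something to be derived. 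Your version keeps the counts random, uses the strong law for the forward direction, and a $\pi$--$\lambda$ / generating-class argument for the converse; this buys you an honest proof that does not require densities and that correctly isolates the uniformity-over-sub-regions issue the paper's heuristic glosses over. The paper's approach is quicker and conveys the intuition (count ratio $=\alpha\times$ density ratio) in one line, while yours is the version one would actually want if the lemma were being stated as a probabilistic result.
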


\begin{proof}
Assume the distributions of $X$ and $Y$ are $f(x)$ and $g(y)$. In the union region of $X$ and $Y$, arbitrarily take one tiny cell (region) $\Delta_i$ with $n_{xi}=\Delta_if(x_i)N_x, n_{yi}=\Delta_ig(y_j)N_y; x_i=y_j$. Then,
\[
\frac{n_{yi}}{n_{xi}}=\frac{\Delta_ig(x_i)N_y}{\Delta_if(x_i)N_x}=\alpha \frac{g(x_i)}{f(x_i)}
\]

Therefore:
\[
\alpha \frac{g(x_i)}{f(x_i)} =\alpha \Leftrightarrow \frac{g(x_i)}{f(x_i)}=1 \Leftrightarrow \forall x_i:g(x_i)=f(x_i)
\]
\end{proof}

\subsection{Sufficient condition}
\textbf{Sufficient condition of Theorem \ref{thm:1}.} \textit{If the two classes $X$ and $Y$ with the same distribution and have sufficient data points, then the distributions of the ICD and BCD sets are nearly identical.}

\begin{figure}[h]
    \centerline{\includegraphics{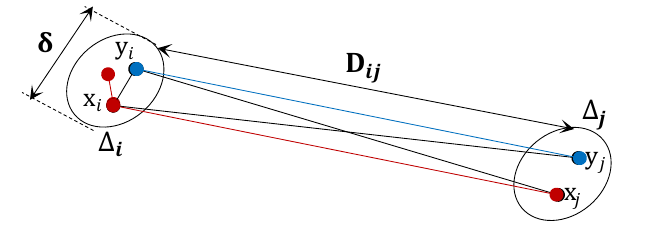}}
    \caption{Two non-overlapping small cells}
    \label{fig:5}
\end{figure}

\begin{proof}
Within the area, select two tiny non-overlapping cells (regions) $\Delta_i$  and $\Delta_j$ (Figure \ref{fig:5}). Since $X$ and $Y$ have the same distribution but in general different densities, the number of points in the two cells $n_{xi},n_{yi};n_{xj},n_{yj}$ fulfills:
\[
\frac{n_{yi}}{n_{xi}} =\frac{n_{yj}}{n_{xj}} =\alpha
\]

The scale of cells is $\delta$, the ICDs and BCDs of $X$ and $Y$ data points in cell $\Delta_i$ are approximately $\delta$ because the cell is sufficiently small. By the Definition~\ref{def:1}~and~\ref{def:2}:
\[
d_{x_i}\approx d_{x_i,y_i}\approx \delta;\quad x_i,y_i\in \Delta_i
\]

Similarly, the ICDs and BCDs of $X$ and $Y$ data points between cells $\Delta_i$  and $\Delta_j$ are approximately the distance between the two cells $D_{ij}$:
\[
d_{x_{ij}}\approx d_{x_i,y_j}\approx d_{y_i,x_j}\approx D_{ij};\; x_i,y_i\in \Delta_i;\, x_j,y_j\in \Delta_j
\]

First, divide the whole distribution region into many non-overlapping cells. Arbitrarily select two cells $\Delta_i$  and $\Delta_j$ to examine the ICD set for $X$ and the BCD set for $X$ and $Y$. By the Definition \ref{def:1} and \ref{def:2}:

\romannum{1} ) The ICD set for $X$ has two distances: $\delta$ and $D_{ij}$, and their numbers are:
\[
d_{x_i}\approx \delta;\; x_i\in \Delta_i:\; |\{d_{x_i}\}|=\frac{1}{2}n_{xi}(n_{xi}-1)
\]
\[
d_{x_{ij}}\approx D_{ij};\; x_i\in \Delta_i;x_j\in \Delta_j:\; |\{d_{x_{ij}}\}|=n_{xi}n_{xj}
\]

\romannum{2} ) The BCD set for $X$ and $Y$ also has two distances: $\delta$ and $D_{ij}$, and their numbers are:
\[
d_{x_i,y_i}\approx \delta;\; x_i,y_i\in \Delta_i:\; |\{d_{x_i ,y_i}\}|=n_{xi} n_{yi}
\]
\[
d_{x_i,y_j}\approx d_{y_i,x_j}\approx D_{ij};\; x_i,y_i\in \Delta_i;x_j,y_j\in \Delta_j:
\]
\[
|\{d_{x_i,y_j}\}|=n_{xi} n_{yj};\; |\{d_{y_i,x_j}\}|=n_{yi} n_{xj}
\]

Therefore, the proportions of the number of distances with a value of $D_{ij}$ in the ICD and BCD sets are:

For ICDs: 
\[
\frac{|\{d_{x_{ij}} \}|}{|\{d_x \}|} =\frac{2n_{xi} n_{xj}}{N_x (N_x-1)}
\]

For BCDs, considering the density ratio:
\[
\frac{|\{d_{x_i,y_j} \}|+|\{d_{y_i,x_j }\}|}{|\{d_{x,y} \}|} =\frac{\alpha n_{xi} n_{xj}+\alpha n_{xi} n_{xj}}{\alpha N_x^2 }=\frac{2n_{xi} n_{xj}}{N_x^2}
\]

The ratio of proportions of the number of distances with a value of $D_{ij}$ in the two sets is:
\[
\frac{N_x (N_x-1)}{N_x^2}=1-\frac{1}{N_x} \to 1 \; \; (N_x\to \infty)
\]

This means that the number of proportions of the number of distances with a value of $D_{ij}$ in the two sets is equal. We then examine the proportions of the number of distances with a value of $\delta$ in the ICD and BCD sets.

For ICDs:
\[
\sum_{i} \frac{|\{d_{x_i}\}|}{|\{d_x\}|} = \frac{\sum_{i} [n_{xi} (n_{xi}-1)]}{N_x (N_x-1)} = \frac{\sum_{i} (n_{xi}^2-n_{xi} )}{N_x^2-N_x} = \frac{\sum_{i} (n_xi^2 ) -N_x}{N_x^2-N_x}
\]

For BCDs, considering the density ratio: 
\[
\sum_{i} \frac{|\{d_{x_i,y_i } \}|}{|\{d_{x,y})\}|} = \frac{\sum_{i} (n_{xi}^2 )}{N_x^2}
\]

The ratio of proportions of the number of distances with a value of $\delta$ in the two sets is:
\[
\frac{\sum_{i} (n_{xi}^2 ) }{N_x^2 }\cdot \frac{N_x^2-N_x}{\sum_{i} (n_{xi}^2 ) -N_x } = \sum_{i} \left(\frac{n_{xi}^2}{N_x^2} \right) \cdot \frac{1-\frac{1}{N_x}}{\sum_{i} \left(\frac{n_{xi}^2}{N_x^2} \right) -\frac{1}{N_x}}\to 1 \; \; (N_x\to \infty)
\]

This means that the number of proportions of the number of distances with a value of $\delta$ in the two sets is equal.

In summary, the fact that the proportion of any distance value ($\delta$ or $D_{ij}$) in the ICD set for $X$ and in the BCD set for $X$ and $Y$ is equal indicates that the distributions of the ICD and BCD sets are identical, and a corresponding proof applies to the ICD set for $Y$.
\end{proof}

\subsection{Necessary condition}
\textbf{Necessary condition of Theorem \ref{thm:1}.} \textit{If the distributions of the ICD and BCD sets with sufficient data points are nearly identical, then the two classes $X$ and $Y$ must have the same distribution.}

\begin{remark}
We prove its \textbf{contrapositive}: if $X$ and $Y$ do not have the same distribution, the distributions of the ICD and BCD sets are not identical. We then apply proof by \textbf{contradiction}: suppose that $X$ and $Y$ do not have the same distribution, but the distributions of the ICD and BCD sets are identical.
\end{remark}

\begin{proof}
Suppose classes $X$ and $Y$ have the data points $N_x, N_y$, which $\frac{N_y}{N_x} =\alpha $. Divide their distribution area into many non-overlapping tiny cells (regions). In the $i$-th cell $\Delta_i$, since distributions of $X$ and $Y$ are different, according to Lemma \ref{lma:1}, the number of points in the cell $n_{xi},n_{yi}$ fulfills:
\[
\frac{n_{yi}}{n_{xi}} = \alpha _i; \; \; \exists \alpha _i \neq \alpha
\]

The scale of cells is $\delta$ and the ICDs and BCDs of the $X$ and $Y$ points in cell $\Delta_i$ are approximately $\delta$ because the cell is sufficiently small.
\[
d_{x_i}\approx d_{y_i}\approx d_{x_i,y_i}\approx \delta; \; \; x_i,y_i\in \Delta_i
\]

In the $i$-th cell $\Delta_i$:

\romannum{1}) The ICD of $X$ is $\delta$, with a proportion of:
\begin{equation}
\label{eq:1}
    \sum_{i} \frac{|\{d_{x_i}\}|}{|\{d_x\}|} = \frac{\sum_{i} [n_{xi} (n_{xi}-1)]}{N_x (N_x-1)} = \frac {\sum_{i} (n_{xi}^2-n_{xi} )}{N_x^2-N_x}=\frac{\sum_{i} (n_{xi}^2 ) -N_x}{N_x^2-N_x}
\end{equation}

\romannum{2}) The ICD of $Y$ is $\delta$, with a proportion of:
\begin{multline} \label{eq:2}
    \sum_{i} \frac{|\{d_{y_i}\}|}{|\{d_y\}|} = \frac{\sum_{i} [n_{yi} (n_{yi}-1)]}{N_y (N_y-1)}=\frac {\sum_{i} (n_{yi}^2-n_{yi} )}{N_y^2-N_y}\\
    =\frac{\sum_{i} (n_{yi}^2 ) -N_y}{N_y^2-N_y}\Bigg\rvert_{\substack{N_y=\alpha N_x \\ n_{yi} = \alpha _i n_{xi}}} = \frac{\sum_{i} (\alpha _i^2 n_{xi}^2 ) -\alpha N_x}{\alpha^2 N_x^2-\alpha N_x}
\end{multline}

\romannum{3}) The BCD of $X$ and $Y$ is $\delta$, with a proportion of:
\begin{equation}
\label{eq:3}
    \sum_{i} \frac{|\{d_{x_i,y_i} \}|}{|\{d_{x,y} \}|}=\frac {\sum_{i} (n_{xi} n_{yi} ) }{N_x N_y}=\frac {\sum_{i} (\alpha _i n_{xi}^2 ) }{\alpha N_x^2}
\end{equation}

For the distributions of the two sets to be identical, the ratio of proportions of the number of distances with a value of $\delta$ in the two sets must be 1, that is $\frac{(\ref{eq:3})}{(\ref{eq:1})}=\frac{(\ref{eq:3})}{(\ref{eq:2})}=1$. Therefore,

\begin{multline} \label{eq:4}
    \frac{(\ref{eq:3})}{(\ref{eq:1})}= \frac {\sum_{i} (\alpha _i n_{xi}^2 ) }{\alpha N_x^2} \cdot \frac{N_x^2-N_x}{\sum_{i} (n_{xi}^2 )-N_x}\\
    = \frac{1}{\alpha N_x^2}\sum_{i} (\alpha _i n_{xi}^2 )\cdot \frac {1-\frac {1}{N_x} }{\frac {1}{N_x^2} \sum_{i}(n_{xi}^2 ) -\frac {1}{N_x}}\Bigg\rvert_{N_x\to \infty} \\
    =\frac {1}{\alpha}\cdot \frac{\sum_{i} (\alpha_i n_{xi}^2 ) }{\sum_{i}(n_{xi}^2 )}=1
\end{multline}

Similarly,
\begin{multline} \label{eq:5}
    \frac{(\ref{eq:3})}{(\ref{eq:2})}= \frac {\sum_{i} (\alpha _i n_{xi}^2 ) }{\alpha N_x^2} \cdot \frac{\alpha^2 N_x^2-\alpha N_x}{\sum_{i} (\alpha _i^2 n_{xi}^2 ) -\alpha N_x}\\
    = \frac{\sum_{i} (\alpha _i n_{xi}^2 )}{N_x^2}\cdot \frac {\alpha-\frac {1}{N_x} }{\frac {1}{N_x^2} \sum_{i} (\alpha _i^2 n_{xi}^2 ) -\frac {\alpha}{N_x}}\Bigg\rvert_{N_x\to \infty} \\
    =\alpha \cdot \frac{\sum_{i} (\alpha_i n_{xi}^2 ) }{\sum_{i} (\alpha _i^2 n_{xi}^2 )}=1
\end{multline}

To eliminate the $\sum_{i} (\alpha _i n_{xi}^2 )$ by considering the Eq.~\ref{eq:4}~and~\ref{eq:5}, we have:
\[
\sum_{i}(n_{xi}^2 )=\frac{\sum_{i} (\alpha _i^2 n_{xi}^2 )}{\alpha^2}
\]

Let $\rho_i=\left(\frac{\alpha_i}{\alpha}\right)^2$, then,
\[
\sum_{i}(n_{xi}^2 )=\sum_{i} (\rho_i n_{xi}^2 )
\]

Since $n_{xi}$ could be any value, to hold the equation requires $\rho_i=1$. Hence:
\[
\forall \rho_i=\left(\frac{\alpha_i}{\alpha}\right)^2=1 \Rightarrow\forall \alpha_i=\alpha
\]

This contradicts $\exists \alpha_i\neq \alpha$. Therefore, the contrapositive proposition has been proved. 
\end{proof}

\section{Synthetic Datasets}
\label{sec:syn_names}

\begin{table}
\tbl{Names of the 97 used synthetic datasets from the Tomas Barton repository$^a$}
{\begin{tabular}{cccccc} \toprule
3-spiral & 2d-10c & ds2c2sc13 & rings & square5 & complex8 \\
aggregation & 2d-20c-no0 & ds3c3sc6 & shapes & st900 & complex9 \\
2d-3c-no123 & threenorm & ds4c2sc8 & simplex & target & compound \\
dense-disk-3000 & triangle1 & 2d-4c & sizes1 & tetra & donutcurves \\
dense-disk-5000 & triangle2 & 2dnormals & sizes2 & curves1 & donut1 \\
elliptical\_10\_2 & dartboard1 & engytime & sizes3 & curves2 & donut2 \\
elly-2d10c13s & dartboard2 & flame & sizes4 & D31 & donut3 \\
2sp2glob & 2d-4c-no4 & fourty & sizes5 & twenty & zelnik1 \\
cure-t0-2000n-2D & 2d-4c-no9 & xor & smile1 & aml28 & zelnik2 \\
cure-t1-2000n-2D & pmf & hepta & smile2 & wingnut & zelnik3 \\
twodiamonds & diamond9 & hypercube & smile3 & xclara & zelnik5 \\
spherical\_4\_3 & disk-1000n & jain & atom & R15 & zelnik6 \\
spherical\_5\_2 & disk-3000n & lsun & blobs & pathbased &   \\
spherical\_6\_2 & disk-4000n & long1 & cassini & square1 &   \\
chainlink & disk-4500n & long2 & spiral & square2 &   \\
spiralsquare & disk-4600n & long3 & circle & square3 &   \\
gaussians1 & disk-5000n & longsquare & cuboids & square4 &  \\
 \botrule
\end{tabular}}
\begin{tabnote}
$^{a.}$ Available at \url{https://github.com/deric/clustering-benchmark/tree/master/src/main/resources/datasets/artificial}.
\end{tabnote}
\label{tab:syn_data}
\end{table}


\bibliographystyle{ws-ijait}
\bibliography{ref}

\end{document}